\documentclass[letterpaper, 10 pt, conference]{ieeeconf}  

\IEEEoverridecommandlockouts                            

\usepackage[T1]{fontenc}
\usepackage{graphics} 
\usepackage{epsfig} 
\usepackage{mathptmx} 
\usepackage{times} 
\usepackage{amsmath} 
\usepackage{amssymb} 
\usepackage{hyperref}
\usepackage{cleveref}
\usepackage{cite}
\usepackage{todonotes}
\usepackage{tikz}
\usepackage{siunitx}

\usetikzlibrary{arrows.meta,positioning,calc,shapes.geometric}

\crefname{figure}{Fig.}{Figs.}
\Crefname{figure}{Fig.}{Figs.}

\newtheorem{theorem}{Theorem}
\newtheorem{lemma}{Lemma}
\newtheorem{corollary}{Corollary}

\newtheorem{conjecture}{Conjecture}
\newtheorem{definition}{Definition}

\newtheorem{remark}{Remark}
\newcommand{\R}{\mathbb{R}}
\newcommand{\C}{\mathbb{C}}
\newcommand{\T}{^{\sf T}}

\title{\vspace{-1.2cm}\LARGE \bf
Pass the Bucket: Efficient, Robust, Local\\
Load Balancing for Teams of Heterogeneous Robots
}

\author{Tobias Wallner$^{1}$, Dominik Krupke$^{1}$, Arne Schmidt$^{1}$, and Sándor P. Fekete$^{1,2}$
\thanks{This work was supported by DFG project „Space Ants`` (FE 407/22-1).}
\thanks{$^{1}$Department of Computer Science, TU Braunschweig, Braunschweig, Germany. \texttt{\{wallner, krupke, aschmidt\}@ibr.cs.tu-bs.de}, \texttt{s.fekete@tu-bs.de}. \quad
$^{2}$L3S Research Center, Germany}
\thanks{This paper was submitted to IROS 2026 on March 2nd and accepted on June 17th.}
\thanks{© 2026 IEEE.  Personal use of this material is permitted.  Permission from IEEE must be obtained for all other uses, in any current or future media, including reprinting/republishing this material for advertising or promotional purposes, creating new collective works, for resale or redistribution to servers or lists, or reuse of any copyrighted component of this work in other works.}
}

\begin{document}

\maketitle
\thispagestyle{empty}
\pagestyle{empty}

\begin{abstract}
We study the problem of decentralized, self-organized task sharing
for a swarm of heterogeneous robots that collaborate in transportation or other objectives that 
require coordinated motion planning. 
To this end, we present theoretical and practical results for the simple but
effective mechanism of \emph{bucket brigades} for load balancing,
in which a team of heterogenous robots share a spatial task in a confined, one-dimensional space,
while only being able to sense collisions with neighbors or walls. The
goal is to optimize throughput of the overall system, without central control
or information, aiming at an interval partition proportional to robot velocities.
We address possible chaotic system behavior by developing a 
stabilization mechanism based on
simple local aid, a ``token'', that temporarily decelerates robots after an encounter.
This purely local change eliminates persistent oscillations, resulting
in convergence towards a stable system state. 
We accelerate system convergence
by comparing a single boundary token to ubiquitous two-directional tokens and
optimizing the deceleration factor. Event-driven
simulations report convergence times and robustness: For a large variety of
perturbations (such as robot deletion, position or velocity jittering), 
the system reliably re-converges. The results suggest a local,  
practical mechanism for robust load balancing for heterogeneous teams of robots
that promises an effective tool as basis for more complex scenarios.
\end{abstract}

\section{Introduction}

Multi-robot coordination promises efficiency through parallelism and robustness through redundancy, but faces challenges in communication overhead, centralized control scalability, and resilience to perturbations—especially for heterogeneous teams of autonomous mobile robots (AMRs) with minimal sensing capabilities.

We study \emph{collision-driven bucket brigades} for load balancing on a one-dimensional interval: robots move back and forth, reversing direction upon collision with neighbors or boundaries (\Cref{fig:ex1}). 
Such one-dimensional coordination arises in confined robotic environments including warehouse aisles, agricultural rows, and infrastructure corridors (see \Cref{sec:applications}).
Classical Operations Research has shown that bucket brigades can optimize assembly-line throughput under appropriate worker ordering \cite{bartholdi1996production}. 
Rather than relying on such ordering, we study how collision-driven dynamics can self-organize territory partitioning: 
each robot covers a share of the interval proportional to its speed, achieving balanced spatial load distribution.
In \Cref{sec:applications}, we present examples where the shared interval represents a physical domain; more generally, it could also model stages of a sequential task.

\begin{figure}
    \centering
    \includegraphics[width=\columnwidth]{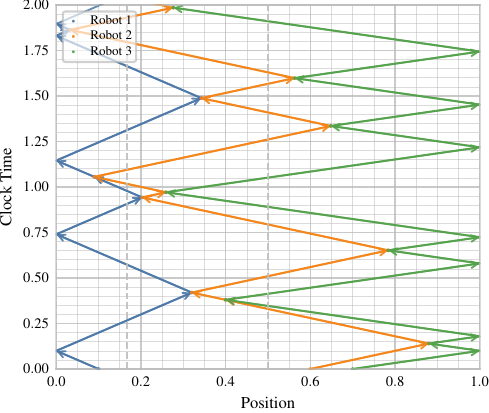}    \vspace{-.6cm}
    \caption{Example configuration: 3 robots; velocities 1, 2, 3; starting positions 0.1, 0.6, 0.7; starting directions left, right, right. The dashed gray vertical lines indicate the equilibrium borders of the robot pairs.}
    \label{fig:ex1}
\end{figure}

The challenge lies in minimal individual capabilities: robots sense only local collisions and can reverse direction. 
They have no shared clock, no communication beyond physical contact, and minimal memory (one stored scalar for damping). 
Such requirements suit resource-constrained platforms and infrastructure-limited environments, as discussed in \Cref{sec:applications}.
Despite these constraints, the collective must autonomously form a balanced steady state and recover from disturbances such as velocity changes or robot additions and removals.
For heterogeneous velocities, the desirable equilibrium partitions the line into subintervals proportional to velocities, with neighboring robots meeting at fixed boundaries in synchronized cycles. 
Achieving this from arbitrary initial conditions is nontrivial: the dynamics are intricate and, even with two identical robots, indefinitely oscillating.

\paragraph*{Contributions}
We develop an event-based analysis of collision dynamics, showing that undamped systems exhibit conservative behavior that prevents asymptotic convergence. 
We introduce a \emph{token damping mechanism}—storing a single collision position per robot and decelerating upon crossing—that reliably induces convergence in simulation. 
Perturbation experiments demonstrate robustness to robot addition/removal, velocity changes, and localization noise. 
Finally, we discuss potential application domains, considerations on physical deployment, and alternative approaches.

\section{Related Work}

Bucket brigades originate in Operations Research for assembly lines. Bartholdi and Eisenstein \cite{bartholdi1996production,bartholdi1999dynamics} showed that deterministic lines with workers ordered by increasing velocity and infinite backward speed self-balance. 
Extensions address throughput maximization \cite{Lim2009_MaximizingThroughput}, stochastic task times \cite{bartholdi2001performance,Peng2022_StochasticDiscrete}, and tree topologies \cite{Bartholdi2006_Trees}. 
Two-directional (cellular) variants are studied in \cite{lim2011cellular,Lim2014_Cellular}.
Finite backward motion complicates convergence: equal finite backward velocity preserves self-balancing \cite{bratcu2009some}, while distinct forward/backward velocities require ordering by inverse-velocity differences \cite{bartholdi2004chaos} (not applicable here, as we assume identical bidirectional velocities). 
\cite{MensingThesis} shows that one-directional damping between the first pair ensures convergence for homogeneous velocities—an idea we generalize to heterogeneous teams.

\emph{Fence patrolling} \cite{czyzowicz2011boundary,fence_kawamura_2015,chen2013fence} addresses related questions: agents with different velocities minimize point latency on a line segment. 
The canonical periodic schedule partitions the line proportionally to velocities, mirroring our equilibrium.

Robotic bucket brigade deployments remain sparse: foraging \cite{adaptive_lein_2008,Ostergaard2001_EmergentBucketBrigading}, assembly \cite{Klavins2000_ConcurrentRobot}, and transport \cite{Itani1995_MiniRobots}. 
These works demonstrate feasibility in principle but lack systematic analysis of heterogeneous teams, damping mechanisms, and convergence---which we address in this work.

Relevant studies regarding potential robotic applications as well as alternative approaches to load balancing and coordination in multi-robot systems are discussed in \Cref{sec:applications}.

\section{Preliminaries}

\paragraph*{Setting and objective}
We consider $n$ robots on the line segment $[0,1]$. Robot $i$ has constant velocity magnitude $v_i>0$, a position $p_i(t)\in[0,1]$ with $p_i(0) \neq p_j(0)$ for all $i\neq j$, and a direction $d_i(t)\in\{-1,+1\}$. 
Robots move at constant velocity between discrete events and reverse only upon interactions:
(i) head-on neighbor encounters cause both to reverse;
(ii) same-direction catch-ups make only the faster one reverse;
(iii) wall contacts at $0$ or $1$ cause a reversal.
Left-to-right order is preserved (no overtakes), and we assume that in case of degeneracy, i.e., multiple event coinciding, collision events are handled ordered by their index.

\paragraph*{Event-based state-collision abscissas}
For each adjacent pair $(i,i{+}1)$, let $x_i(k)\in[0,1]$ be the position of their $k$-th collision
with boundary markers $x_0:=0$, $x_n:=1$.
We focus on the rotational regime in which only alternating head-on encounters occur; then every adjacent pair collides infinitely often and the sequences $\{x_i(k)\}$ are well defined.
Empirically, the resulting $x_i(k)$ trace smooth, almost-sinusoidal patterns; see \Cref{fig:ex2,fig:ex4}.

\paragraph*{Equilibrium partition}
The goal is a balanced equilibrium partition in which neighbors meet at fixed points and the subinterval lengths are proportional to velocities.
The equilibrium points are
$
x_i^* = \frac{\sum_{j=1}^{i} v_j}{\sum_{j=1}^n v_j}\,, \; i=1,\dots,n-1.
$

\begin{figure}
    \centering
    \includegraphics[width=\columnwidth]{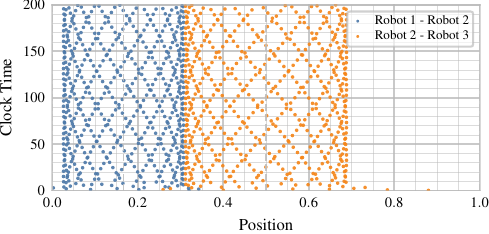}
    \caption{Collision positions for the configuration of \Cref{fig:ex1}. Dashed lines: equilibrium borders $x_i^*$.}
    \label{fig:ex2}
\end{figure}

\begin{figure}
    \centering
    \includegraphics[width=\columnwidth]{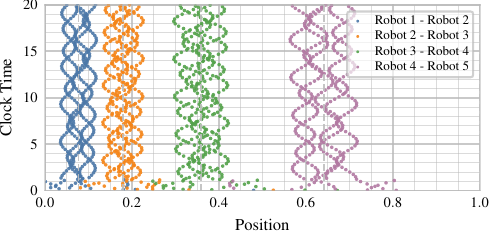}
    \caption{Collision positions for another start configuration of 5 robots. Dashed lines: equilibrium borders $x_i^*$.}
    \label{fig:ex4}
\end{figure}

As we will show in the following section, the system will never converge to equilibrium without additional damping.
To foreshadow our damping mechanism, consider two robots with velocities $v_1,v_2>0$ and let $p_k$ be their $k$-th meeting point. 
We induce damping by slowing down a robot by a constant factor upon crossing $p_k$. 
To illustrate this, drop a \emph{token} at $p_k$: 
the first robot returning to $p_k$ (after its wall bounce) picks it up and moves at velocity $\alpha v_i$ with $0<\alpha<1$; the other keeps full velocity.
Upon collision the token is dropped again. 

\begin{lemma}[Affine contraction toward $x^*$]
\label{lem:2rob2dir}
Let $x^*:=\frac{v_1}{v_1+v_2}$. 
Wlog assume $p_k<x^*$ (else swap indices). 
Then for each~$\alpha\in(0,1)$, there is $\kappa\in (-1,1)$ such that 
$
p_{k+1}-x^*=\kappa\,(p_k-x^*)$.
\end{lemma}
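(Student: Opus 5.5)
We will prove the lemma by computing the next meeting point $p_{k+1}$ explicitly as a function of $p_k$. We then check that this function is affine, fixes $x^*$, and has slope $\kappa\in(-1,1)$.

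\emph{Step 1: order of returns.} Label robot~1 as the left robot and robot~2 as the right one, and take the $k$-th meeting at time $0$ at position $p_k$. Both robots reverse. Robot~1 travels to the wall $0$ and back to $p_k$, arriving at $T_1=2p_k/v_1$. Robot~2 travels to the wall $1$ and back, arriving at $p_k$ at $T_2=2(1-p_k)/v_2$. The condition $p_k<x^*=v_1/(v_1+v_2)$ is equivalent to $p_k/v_1<(1-p_k)/v_2$, so $T_1<T_2$. Hence robot~1 reaches $p_k$ first, picks up the token, and continues rightward at speed $\alpha v_1$, while robot~2 keeps speed $v_2$.

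\emph{Step 2: meeting time and position.} For $t\ge T_1$, robot~1 is at $p_k+\alpha v_1(t-T_1)$. For $t\ge (1-p_k)/v_2$, i.e.\ on its return leg, robot~2 is at $2-p_k-v_2t$. Equating the two gives
\[
t^\dagger(\alpha v_1+v_2)=2-2p_k+2\alpha p_k ,\qquad p_{k+1}=2-p_k-v_2t^\dagger .
\]
Thus $p_{k+1}$ is affine in $p_k$.

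We must also check that this meeting is legitimate. It has to occur after $T_1$ and while robot~2 is on its return leg. Moreover, no wall contact or other event may intervene: robot~1 moves from $p_k$ toward robot~2, which is approaching from the right, so robot~1 cannot reach the wall $1$ first. This bookkeeping is the main obstacle, and I would handle it by checking $T_1\le t^\dagger\le T_2$ directly. Both inequalities reduce to $p_k\le x^*$, the first after using $\alpha<1$. This gives $p_{k+1}\in[p_k,\,2-p_k-v_2T_2]=[p_k,p_k]\cup\ldots$; more precisely, $p_{k+1}$ lies between $p_k$ and robot~2's position at $T_1$, which lies inside $[0,1]$.

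\emph{Step 3: fixed point and slope.} If $p_k=x^*$, then $T_1=T_2$, and the robots meet at $x^*$ regardless of $\alpha$. So $x^*$ is a fixed point of the affine map, and we get $p_{k+1}-x^*=\kappa(p_k-x^*)$ with $\kappa=dp_{k+1}/dp_k$. Differentiating,
\[
\kappa=-1-v_2\frac{2\alpha-2}{\alpha v_1+v_2}=\frac{v_2(1-2\alpha)-\alpha v_1}{\alpha v_1+v_2}.
\]
Numerator minus denominator equals $-2\alpha(v_1+v_2)<0$, so $\kappa<1$. Numerator plus denominator equals $2v_2(1-\alpha)>0$, so $\kappa>-1$. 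This gives $\kappa\in(-1,1)$ for every $\alpha\in(0,1)$.

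The case $p_k>x^*$ follows by the symmetric argument with the indices swapped. Note that $\kappa$ may be negative, so $p_{k+1}$ can overshoot $x^*$; this is why the lemma is stated with the ``wlog'' swap.
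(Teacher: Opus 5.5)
Your main computation is the paper's argument. Matching positions after robot~1 picks up the token is the same as the paper's travel-time identity $\frac{2p_k}{v_1}+\frac{p_{k+1}-p_k}{\alpha v_1}=\frac{(1-p_k)+(1-p_{k+1})}{v_2}$. You arrive at the same affine map and the same $\kappa=\frac{v_2(1-2\alpha)-\alpha v_1}{v_2+\alpha v_1}$. Your Step~1 (why robot~1 returns first) and your bounds on $\kappa$ via numerator minus/plus denominator are useful details that the paper only asserts.

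The legitimacy paragraph in Step~2, however, is wrong as stated. Incidentally, both inequalities $T_1\le t^\dagger\le T_2$ reduce to $p_k\le x^*$ using only $\alpha>0$; $\alpha<1$ is never needed. More importantly, they do not guarantee that robot~2 has already reflected at the wall $1$ when the robots meet. That requires $t^\dagger\ge(1-p_k)/v_2$, i.e.\ $(1-p_k)(v_2-\alpha v_1)+2\alpha v_2 p_k\ge 0$, which can fail when $\alpha v_1>v_2$.

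Take $v_1=100$, $v_2=1$, $\alpha=\tfrac12$, $p_k=\tfrac12$, so that $p_k<x^*=\tfrac{100}{101}$. Then $T_1=0.01\le t^\dagger=1.5/51\le T_2=1$, yet your formula returns $p_{k+1}\approx 1.47\notin[0,1]$. In reality, robot~1 (now at speed $50$) catches robot~2 from behind near $0.51$, a same-direction catch-up, long before robot~2 reaches the wall. Your claim that $p_{k+1}$ lies between $p_k$ and robot~2's position at time $T_1$ fails there as well.

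So the affine formula does not hold for every $p_k<x^*$ unconditionally. It holds precisely under the paper's standing rotational-regime hypothesis: the next encounter is head-on and occurs after both wall reflections. This is what the paper's travel-time equation silently assumes. Drop your legitimacy claim and invoke that hypothesis, or state $t^\dagger\ge(1-p_k)/v_2$ as an explicit assumption; with that change your proof is complete.
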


\begin{proof}
With robot~1 returning first, equate travel times
\[
\underbrace{\tfrac{2p_k}{v_1}+\tfrac{p_{k+1}-p_k}{\alpha v_1}}_{\text{carrying token (slow only on last leg)}}
=\underbrace{\tfrac{(1-p_k)+(1-p_{k+1})}{v_2}}_{\text{other}},
\]
multiply by $\alpha v_1 v_2$ and solve for $p_{k+1}$:
\[
p_{k+1}
=\frac{2\alpha v_1 + \bigl(v_2(1-2\alpha)-\alpha v_1\bigr)p_k}{\,v_2+\alpha v_1\,}.
\]
Recenter at $x^*=\tfrac{v_1}{v_1+v_2}$ to obtain
\[
p_{k+1}-x^*=\kappa\,(p_k-x^*),\quad
\kappa=\frac{v_2(1-2\alpha)-\alpha v_1}{\,v_2+\alpha v_1\,}\in(-1,1).
\]
Hence, with each collision, the abscissa $p_k$ is pulled closer to the equilibrium $x^*$, and we converge.
\end{proof}

\section{Mathematical Analysis}
Now we provide a rigorous algebraic analysis of collision dynamics, using an affine map to describe the update from one set of collision abscissas to the following one. By proving key properties of this map, we show that the undamped system is stable and cannot converge to the equilibrium unless started there. We then give an outlook to the analysis of the damped case with a single or several tokens.
\subsection{Original System}
We track only discrete robot--robot and robot--wall collisions on $[0,1]$ for $n$ robots with velocities $v_1,\dots,v_n>0$.
Let $m:=n-1$ denote the number of adjacent pairs and let $x_i(k)\in[0,1]$ be the abscissa of the $k$-th collision of pair $(i,i{+}1)$, with boundary markers $x_0:=0$ and $x_n:=1$.

In the initial phase, two types of robot-robot collisions can occur: head-on encounters, where both robots reverse direction, and same-direction catch-ups, where only the faster robot reverses. Empirical observations indicate that, after a while, the system enters a \emph{rotational regime}, where only head-on collisions of robots with different movement directions occur. As the desired equilibrium and any state that is close to it fulfill this rotational property, we focus on this catch-up-free regime in the following analysis.

Collect the most recent collision positions into $x=(x_1,\dots,x_m)^\top$ with $x_0=0$ and $x_n=1$.
One alternating sweep of head-on encounters maps $x\mapsto x'$ through a sparse, neighbor-coupled system.
As illustrated in \Cref{fig:collision2}, equating travel times along the two closed quadrilaterals around pair $(i,i{+}1)$ yields the odd/even relations
\begin{align*}
\text{odd }i:\, \,
&x'_i
  =-\,x_i
  +2\,\frac{v_i}{v_i+v_{i+1}}\,x_{i+1}
  +2\,\frac{v_{i+1}}{v_i+v_{i+1}}\,x_{i-1},\\
\text{even }i:\, \,
&(v_i+v_{i+1})\,x'_i
  -2\,v_{i+1}\,x'_{i-1}
  -2\,v_i\,x'_{i+1}
  =-(v_i+v_{i+1})\,x_i.
\end{align*}

\begin{figure}
    \centering
    \begin{tikzpicture}[
  >=Latex,
  blueStep/.style={blue!75, line width=1.0pt},
  greenStep/.style={green!60!black, line width=1.0pt},
  orangeStep/.style={orange!85!black, line width=1.0pt},
  ref/.style={gray!70, dashed, line width=0.9pt},
  dotGray/.style={circle, inner sep=1.8pt, fill=black, draw=none},
  diaBlue/.style={diamond, draw=blue!75, fill=blue!20, minimum size=4pt, inner sep=1.4pt},
  triGreen/.style={regular polygon, regular polygon sides=3, draw=green!60!black, fill=green!20, inner sep=1.6pt},
  triOrange/.style={regular polygon, regular polygon sides=3, draw=orange!85!black, fill=orange!20, inner sep=1.6pt},
  lbl/.style={font=\small}
]

\coordinate (xm1)   at (0.00, 1.00);   
\coordinate (xi)    at (1.60, 2.10);  
\coordinate (xip)   at (1.00, 0.60);   
\coordinate (xp1)   at (3.20, 1.40);  
\coordinate (xip1p) at (5.00, 0.00);   
\coordinate (xip2p) at (6.60, 0.50);  
\coordinate (xp2)   at (6.60, 3.20);

\draw[blueStep, -{Latex[length=3mm]}] (xi) -- node[lbl, above left, xshift=6pt, yshift=1pt] {$x_i - x_{i-1}$} (xm1);
\draw[blueStep, -{Latex[length=3mm]}] (xm1) -- node[lbl, below=2pt, xshift=-9pt, yshift=4pt] {$x'_i - x_{i-1}$} (xip);

\draw[greenStep, -{Latex[length=3mm]}] (xi) -- node[lbl, above=2pt, xshift=5pt] {$x_{i+1} - x_i$} (xp1);
\draw[greenStep, -{Latex[length=3mm]}] (xp1) -- node[lbl, above=1pt, xshift=-10pt, yshift=-2pt] {$x_{i+1} - x'_i$} (xip);
\draw[greenStep, -{Latex[length=3mm]}] (xip) -- node[lbl, below=1pt] {$x'_{i+1} - x'_i$} (xip1p);

\draw[orangeStep, -{Latex[length=3mm]}] (xp1) -- node[lbl, above=2pt, xshift=9pt, yshift=-3pt] {$x'_{i+2} - x_{i+1}$} (xip2p);
\draw[orangeStep, -{Latex[length=3mm]}] (xip2p) -- node[lbl, below=2pt, xshift=8pt, yshift=0pt] {$x'_{i+2} - x'_{i+1}$} (xip1p);

\node[dotGray, label={[lbl, black]above left:$x_{i-1}$}] at (xm1) {};

\node[dotGray, label={[lbl, black]above left:$x_{i}$}] at (xi) {};
\node[dotGray, label={[lbl, black]below right:$x'_{i}$}] at (xip) {};

\node[dotGray, label={[lbl, black]above right:$x_{i+1}$}] at (xp1) {};
\node[dotGray, label={[lbl, black]below left:$x'_{i+1}$}] at (xip1p) {};
\node[dotGray, rotate=-20, label={[lbl, black]below right:$x'_{i+2}$}] at (xip2p) {};

\end{tikzpicture}
    \caption{Closed time-matching quadrilaterals around pair $(i,i{+}1)$ (shown for odd $i$) yield the two linear relations for $(x_i',x_{i\pm1}')$ in one alternating sweep.}
    \label{fig:collision2}
\end{figure}

Stacking these $m$ relations yields a three-banded system
$
A\,x' \;=\; B\,x \;+\; c
\,\Leftrightarrow \,
x' = A^{-1}B\,x + A^{-1}c,
$
where $A$ and $B$ reflect the coefficients of $x'$ and $x$ in the odd/even relations, and $c$ collects the constants.
Let $x^*$ denote the velocity-proportional equilibrium partition defined in Section~III.
With $u:=x-x^*$, the centered dynamics are linear:
$
u' \;=\; M\,u,
$ with $
M:=A^{-1}B\in\mathbb{R}^{m\times m}.
$
Note that $x^*$ is the unique fixed point of the affine map $x\mapsto A^{-1}B x + A^{-1}c$, and, equivalently, $u\mapsto M u$ has the unique fixed point $u=0$.
Now, we show that there exists a quadratic invariant of the dynamics, which allows us to interpret the motion as a rigid rotation on ellipsoids.

\begin{lemma}[Rotation radius (G-isometry)]
\label{lem:radius}
There exists a symmetric positive definite tridiagonal matrix $G\in\mathbb{R}^{m\times m}$ such that
$M^{\T} G M = G$. 
\end{lemma}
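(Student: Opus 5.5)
The plan is to factor one alternating sweep into two half-sweeps, show that each half-sweep is a product of reflections, and exhibit a single tridiagonal $G$ for which every one of these reflections is orthogonal. Then $M$ is a product of $G$-orthogonal maps, so $M^{\T}GM=G$.

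\emph{Step 1: factorization.} Centering at $x^*$, write $\ell_j(u):=\frac{v_j\,u_{j+1}+v_{j+1}\,u_{j-1}}{v_j+v_{j+1}}$ with $u_0=u_n=0$. The odd relation reads $u'_i=-u_i+2\ell_i(u)$, and it uses only old even coordinates. The even relation, divided by $v_i+v_{i+1}$, reads $u'_i=-u_i+2\ell_i(u')$, and it uses only the \emph{new} odd coordinates. That the constant terms drop out is exactly the statement that $x^*$ is a fixed point, which the paper already notes. Define the coordinate maps
\[
R_j:\; u\mapsto u-2\bigl(u_j-\ell_j(u)\bigr)e_j .
\]
Then $M=\bigl(\prod_{j\text{ even}}R_j\bigr)\bigl(\prod_{j\text{ odd}}R_j\bigr)$. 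Within each parity class the $R_j$ commute, since $R_j$ changes only coordinate $j$ and reads only coordinates $j\pm1$, which have the other parity. So applying the odd maps sequentially coincides with the simultaneous odd update, and likewise for the even maps. I will check that this composition reproduces $A^{-1}B$ by comparing it row by row with the odd and even relations.

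\emph{Step 2: choice of $G$.} A map of the form $u\mapsto u-2\,\frac{(Gu)_j}{G_{jj}}\,e_j$ is the $G$-orthogonal reflection in the $G$-normal vector $e_j$: it fixes the $G$-orthogonal complement of $e_j$ and sends $e_j\mapsto -e_j$. Hence $R_j$ is $G$-orthogonal provided row $j$ of $G$ is proportional to the coefficient vector of $u_j-\ell_j(u)$, that is,
\[
G_{j,j+1}=-G_{jj}\tfrac{v_j}{v_j+v_{j+1}},\qquad G_{j,j-1}=-G_{jj}\tfrac{v_{j+1}}{v_j+v_{j+1}} .
\]
The main obstacle is making these row conditions compatible with the symmetry of $G$. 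My first attempt is $G_{jj}=\frac1{v_j}+\frac1{v_{j+1}}$. This gives $G_{j,j+1}=-\frac1{v_{j+1}}$ from row $j$. From row $j+1$ it gives $G_{j+1,j}=-\bigl(\frac{1}{v_{j+1}}+\frac1{v_{j+2}}\bigr)\frac{v_{j+2}}{v_{j+1}+v_{j+2}}=-\frac1{v_{j+1}}$. The two agree, so $G$ is symmetric and tridiagonal.

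\emph{Step 3: positive definiteness and conclusion.} With this $G$ one has
\[
u^{\T}Gu=\sum_{j=0}^{m}\frac{(u_{j+1}-u_j)^2}{v_{j+1}},\qquad u_0=u_{m+1}=0 .
\]
Each term is nonnegative, so the form is nonnegative. It vanishes only if all consecutive differences vanish, and the boundary zeros then force $u=0$; hence $G$ is positive definite. Each $R_j$ satisfies $R_j^{\T}GR_j=G$, so the same holds for their product, giving $M^{\T}GM=G$.

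The quantity $u^{\T}Gu$ is the velocity-weighted squared deviation of the subinterval lengths from the equilibrium lengths, which fits the rotation-on-ellipsoids interpretation. The only delicate bookkeeping is the index conventions at the two ends $j=1$ and $j=m$, where $u_0=u_n=0$ simply removes the missing off-diagonal entry.
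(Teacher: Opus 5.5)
Your proposal is correct and follows the paper's proof. The paper likewise writes $M=E_{\text{even}}E_{\text{odd}}$ as a product of single-event operators and builds a tridiagonal $G$ so that each $E_i$ preserves $u^{\T}Gu$; its recursion $g_i=\frac{b_{i+1}}{a_i}g_{i+1}$, $h_i=-\frac{a_i}{2}g_i$ produces exactly your matrix (normalizing $g_i=\frac{1}{v_i}+\frac{1}{v_{i+1}}$ gives $h_i=-\frac{1}{v_{i+1}}$), and your $G$-reflection view and the identity $u^{\T}Gu=\sum_{j=0}^{m}(u_{j+1}-u_j)^2/v_{j+1}$ simply replace the paper's $3\times3$ block check and its continuant argument for positive definiteness.
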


\begin{proof}
We give a sketch of the proof; some details and explicit computations are omitted due to space constraints.

\emph{Local update.}
In centered coordinates $u$ (with $u_0=u_{m+1}=0$), the two time-matching quadrilateral equalities for pair $i$ (the ``odd'' and ``even'' cases shown above) both reduce---after subtracting the equilibrium and eliminating $x^*$---to the same linear row update of the form
$
u'_i \;=\; -\,u_i \;+\; a_i\,u_{i+1} \;+\; b_i\,u_{i-1},
$
i.e., replacing only the $i$-th coordinate by this combination of its two neighbors; all other $u_j$ remain unchanged.
Define for each $i\in\{1,\dots,m\}$, the linear single-event operator $E_i\in\R^{m\times m}$ such that $(E_i u)_j = -u_i + a_i u_{i+1} + b_i u_{i-1}$ for $j=i$ and $(E_i u)_j =u_j$ otherwise.
Thus a single head-on event at pair $i$ applies $u\mapsto E_i u$.

\emph{Choice of $G$.}
We now construct a symmetric tridiagonal matrix $G$ with diagonal entries $g_i>0$ and off-diagonals $h_i<0$ ($G_{i,i}=g_i$, $G_{i,i+1}=G_{i+1,i}=h_i$) that makes every single-event operator $E_i$ a $G$-isometry: $E_i^{\T} G E_i=G$ for all $i$.
Because $E_i$ alters only coordinate $i$, the change in $Q(u):=u^{\T}Gu$ depends only on the $3\times3$ block involving indices $i-1,i,i+1$.
We choose $h_i$ and $g_i$ such that
$E_i^{\T} G E_i = G$ for all $i$.

\emph{Positive definiteness} (so $Q(u)=u^{\T} G u$ is a norm).
Choose any scale $g_m>0$, set $g_i=\frac{b_{i+1}}{a_i}g_{i+1}$ (backwards), then $h_i=-(a_i/2)g_i$; hence all $g_i>0$ and $h_i<0$.
We use a standard continuant for determinants of tridiagonal matrices
and inductively show that
all leading minors are positive and $G\succ0$.

\emph{Single event invariance.}
Event $i$ changes $u_i$ to $-u_i+a_i u_{i+1}+b_i u_{i-1}$.
The change in $Q(u)$ depends only on $u_{i-1}$, $u_i$, $u_{i+1}$.
$G$ is chosen such that $Q$ is preserved:
$E_i^{\T}GE_i=G$.

\emph{One sweep as product.}
Let
$E_{\text{odd}}=\prod_{i\ \text{odd}}^{\nearrow}E_i$,
$E_{\text{even}}=\prod_{i\ \text{even}}^{\nearrow}E_i$.
Odd rows update using original neighbors; then even rows use the already updated odd neighbors---exactly the two-phase physical sweep derived from $A x'=B x + c$. A direct row-wise check of $A x'=B x + c$ shows: each odd row is satisfied after the first phase, each even row after substituting the updated odd neighbors; by invertibility of $A$ this forces $x'=E_{\text{even}}E_{\text{odd}}x=Mx$. Thus $M=E_{\text{even}}E_{\text{odd}}$.

\emph{Global invariance.}
Using transposition reversal and single-event invariance:
$
  M^{\T}GM =(E_{\text{even}}E_{\text{odd}})^{\T}G(E_{\text{even}}E_{\text{odd}})
=E_{\text{odd}}^{\T}(E_{\text{even}}^{\T}GE_{\text{even}})E_{\text{odd}}=G.
$
Thus $M$ is a $G$-isometry.
\end{proof}

\begin{corollary}[Rotation form]
\label{cor:rotform}
Each trajectory lies on the invariant ellipsoid $\{u:u^{\T}Gu=R^2\}$. 
\end{corollary}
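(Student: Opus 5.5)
The plan is to derive the corollary directly from Lemma~\ref{lem:radius}, combined with the reduction to centered linear dynamics. Fix an initial state in the rotational regime and let $u(0):=x(0)-x^*$. Set $R:=\sqrt{u(0)^{\T}Gu(0)}$. This is a well-defined nonnegative real number because $G\succ0$ by Lemma~\ref{lem:radius}. Moreover, $R=0$ exactly when $u(0)=0$, i.e., when the system starts at the equilibrium; in that case the ellipsoid degenerates to the single point $\{0\}$ and the claim is trivial.

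Next I will show that the quadratic form $Q(u):=u^{\T}Gu$ is invariant along the sweep-to-sweep trajectory. Since $u(k+1)=Mu(k)$, the identity $M^{\T}GM=G$ from Lemma~\ref{lem:radius} gives
\[
Q(u(k+1))=u(k)^{\T}M^{\T}GMu(k)=u(k)^{\T}Gu(k)=Q(u(k)).
\]
Induction on $k$ then yields $Q(u(k))=R^2$ for all $k\ge0$. Hence every iterate lies on $\{u:u^{\T}Gu=R^2\}$. Because $G$ is symmetric positive definite, this level set is a genuine (bounded) ellipsoid.

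The only real subtlety is the meaning of ``trajectory''. If it refers to the sequence of full sweeps, the argument above is complete. If it is meant to include the intermediate states after each single head-on event, I would instead use the stronger single-event invariance $E_i^{\T}GE_i=G$ established inside the proof of Lemma~\ref{lem:radius}. The same one-line computation then shows that $Q$ is preserved after every individual collision, so all partial-sweep states also lie on the same ellipsoid.

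I would also note the standing assumption explicitly. The argument requires the system to remain in the rotational regime, so that the update is exactly $M$ (or a product of the $E_i$). This assumption is part of the setting in which $M$ was derived. As a remark, the invariance implies that $\|u(k)\|$ is bounded away from zero whenever $R>0$: by equivalence of norms, $\lambda_{\min}(G)\|u\|^2\le R^2$ bounds $\|u\|$ from above, and $\lambda_{\max}(G)\|u\|^2\ge R^2$ bounds it from below. This yields the non-convergence statement that the corollary is meant to support.
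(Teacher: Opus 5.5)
Your proposal is correct and matches the paper's proof: both simply note that $(Mu)^{\T}G(Mu)=u^{\T}(M^{\T}GM)u=u^{\T}Gu$ by Lemma~\ref{lem:radius}, with $R$ fixed by the initial state. Your extras (the explicit induction, per-event invariance via $E_i^{\T}GE_i=G$, and the eigenvalue bounds on $\|u\|$) are valid but not needed for the corollary.
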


\begin{proof}
  The product $u^{\T}Gu$ is invariant under $u\mapsto Mu$: $(Mu)^{\T}G(Mu)=u^{\T}(M^{\T}GM)u=u^{\T}Gu$ by \Cref{lem:radius}.
\end{proof}

\begin{remark}[Geometric note]
The map is a composition of planar rotations in the $G$-inner product. 
\Cref{fig:ex3} illustrates this rotational evolution for the configuration of \Cref{fig:ex1}.
\end{remark}

We now formalize the non-convergence consequence.

\begin{theorem}[No convergence without equilibrium start]
\label{prop:noconv}
In the rotational regime, $u_k \to 0$ if and only if $u_0=0$.
\end{theorem}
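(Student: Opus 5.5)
The plan is to use the quadratic invariant from \Cref{lem:radius} and \Cref{cor:rotform}. Set $u_{k+1}=Mu_k$, so that $u_k=M^k u_0$, and define $Q(u):=u^{\T}Gu$. By \Cref{cor:rotform}, $Q(u_k)=Q(u_0)=:R^2$ for every $k$. Since $G$ is symmetric positive definite (\Cref{lem:radius}), $Q$ is the square of a norm $\|u\|_G:=\sqrt{u^{\T}Gu}$, and this norm is equivalent to the Euclidean norm:
\[
\lambda_{\min}(G)\,\|u\|_2^2\le Q(u)\le \lambda_{\max}(G)\,\|u\|_2^2,
\qquad \lambda_{\min}(G)>0 .
\]

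The two directions then go as follows. For ``if'': when $u_0=0$, linearity gives $u_k=M^k0=0$ for all $k$, so $u_k\to0$. This is just the statement that $x^*$ is the fixed point of the affine map. For ``only if'': assume $u_k\to0$. Continuity of $Q$ gives $Q(u_k)\to Q(0)=0$. But $Q(u_k)$ is constantly $Q(u_0)$, so $Q(u_0)=0$. Positive definiteness of $G$ then forces $u_0=0$. An equivalent form of the contrapositive: if $u_0\neq0$, then $\|u_k\|_2\ge R/\sqrt{\lambda_{\max}(G)}>0$ for all $k$, so the trajectory stays bounded away from the equilibrium.

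The main obstacle is not the algebra but making sure the hypotheses of \Cref{lem:radius} hold along the entire trajectory. The identity $u_{k+1}=Mu_k$ with a fixed $M$ is valid only while the system remains in the rotational regime, i.e.\ while every sweep consists solely of head-on encounters in the alternating order. I would therefore read the theorem's hypothesis ``in the rotational regime'' as the assumption that this holds for all $k\ge0$. With that assumption the update matrix is the same at every step and the invariance argument applies with no further work.

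I would also note that positive definiteness of $G$ carries the whole argument. A merely invariant semidefinite form would not rule out convergence along its kernel. So the proof depends on the continuant argument in \Cref{lem:radius} establishing $G\succ0$ for all velocity choices.
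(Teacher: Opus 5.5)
Your proof is correct and follows the paper's argument. Both use the invariance of $u_k^{\T}Gu_k$ from \Cref{lem:radius} together with $G\succ0$ to rule out convergence when $u_0\neq0$, and note that $u_0=0$ is a fixed point. Your continuity and norm-equivalence step, and your reading of ``rotational regime'' as holding for all $k$, only make explicit what the paper's short proof leaves implicit.
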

\begin{proof}
From \Cref{lem:radius}, $u_k^{\T} G u_k = u_0^{\T} G u_0$ for all $k$. If $u_0\neq 0$, then $u_k^{\T} G u_k$ is a fixed positive constant, so $u_k$ cannot converge to $0$. Conversely, if $u_0=0$, then $u_k= 0$.
\end{proof}

\begin{figure}
    \centering
    \includegraphics[width=0.8\columnwidth]{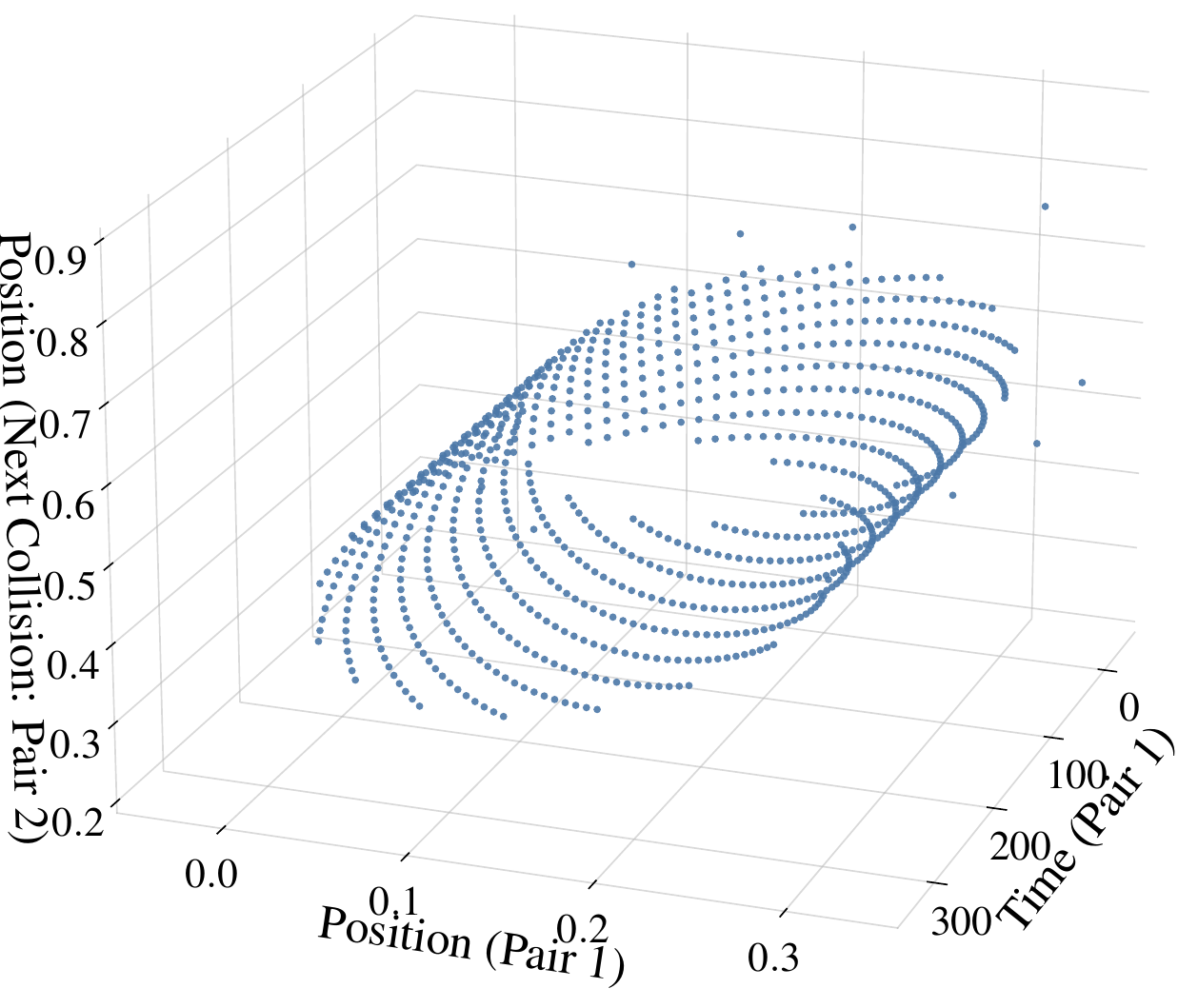}
    \caption{Collision positions of both pairs for exemplary start configuration of \Cref{fig:ex1}. This illustrates the rotational evolution of $(x,y)$.}
    \label{fig:ex3}
\end{figure}

\begin{remark}
  \Cref{lem:radius} and complementing computations additionally imply that $M$, its characteristic polynomial $\chi_M$ and its spectrum $\sigma(M)$ have these algebraic properties:

  (i) $\det M = (-1)^m$, showing that $M$ is volume-preserving for $m$ even, and a composition of a volume-preserving map and a reflection for $m$ odd.
(ii) All eigenvalues of $M$ lie on the unit circle, establishing that $M$ represents a composite of planar rotations and causes oscillating collision abscissas.
  (iii) $\chi_M$ is \emph{self-reciprocal}:
      $
          \chi_M(\lambda)=\lambda^{\,m}\,\chi_M\!\bigl(\tfrac1\lambda\bigr)
          $ with $\lambda\in\C.
      $
      The eigenvalues appear in conjugate pairs $\{\lambda,\overline\lambda\}$, plus
      the single real number $-1$ if $m$ is odd.
      (iv) $-1$ is an eigenvalue if and only if $m$ is odd (and then
      with multiplicity one).
\end{remark}

\subsection{Damped System: Token Mechanics}
The undamped rotational transfer $u'\!=\!Mu$ is conservative and yields no contraction toward the equilibrium.

To induce damping, we introduce a \emph{token}: a stored collision abscissa between adjacent robots. When a robot crosses the token, it slows by factor $\alpha\in(0,1)$ until its next reversal. At a collision with another robot, the token is dropped. This modifies only the local timing of the adjacent pair. A token always remains between its original robot pair.

\newcommand{\pebOne}{\texttt{Token-1}}

\subsubsection{Single, one-directional token ("\pebOne")}

As the simplest version of this technique, place a single token between robots $1$ and $2$ that only affects the velocity of robot 1; robot 2 also picks up and drops the token, but without adjusting its velocity. We call this token \emph{one-directional}, as it only impacts one robot.
This changes only the first pair's relation:
$
\frac{2x_1}{v_1}+\frac{x'_1-x_1}{\alpha v_1}
   \;=\;
   \frac{x_2-x_1}{v_2}+\frac{x_2-x'_1}{v_2},
$
all other equations remain unchanged.
In matrix form, with the same $A$, replace $B$ by $B_\alpha$ 
and obtain
$
u' \;=\; M_\alpha\,u,
$ with $
M_\alpha := A^{-1}B_\alpha.
$

$M_\alpha$ now represents the collision abscissa update for a step where the first robot actually picks up and carries the token. 
Otherwise,
the original $M$ applies.
We aim at following an argument similar to the undamped case, but now $M_\alpha$ is not a $G$-isometry anymore. As a first step towards convergence, we show that the determinant contracts.

\begin{lemma}[Determinant contraction with one token]
\label{lem:token-det}
For each $0<\alpha<1$ we have $|\det M_\alpha|<1$.
\end{lemma}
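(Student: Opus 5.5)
The plan is to use the sweep factorization from the proof of \Cref{lem:radius}, modified only in the first factor, and to use that the determinant of a single-event operator is just its diagonal entry. In that proof we saw $M=E_{\text{even}}E_{\text{odd}}$, where each $E_i$ differs from the identity only in row $i$. Because off-diagonal entries in that row do not affect the determinant of such a matrix, $\det E_i=-1$. Hence $\det M=(-1)^m$, consistent with part (i) of the remark.

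For $M_\alpha$, only the first pair's relation changes, and it involves only $x_1'$, $x_1$ and $x_2$ (with $x_0=0$). Since pair $1$ is odd, I will solve that relation for $x_1'$ in terms of old coordinates only. Multiplying by $\alpha v_1 v_2$ gives
\[
(v_2+\alpha v_1)\,x_1' \;=\; 2\alpha v_1\,x_2 \;+\; \bigl(v_2-\alpha v_1-2\alpha v_2\bigr)\,x_1 .
\]
Recentering at $x^*$ should eliminate the constants, since $x^*$ remains a fixed point: there the token sits at the equilibrium abscissa and the slow leg has length zero. This gives a modified single-event operator $E_1^{\alpha}$ with row
\[
u_1' \;=\; \kappa_1 u_1 + \beta u_2,\qquad
\kappa_1=\frac{v_2(1-2\alpha)-\alpha v_1}{v_2+\alpha v_1},\quad
\beta=\frac{2\alpha v_1}{v_2+\alpha v_1},
\]
which is precisely the coefficient $\kappa$ of \Cref{lem:2rob2dir}.

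Next I will repeat the row-wise argument from \Cref{lem:radius}. Odd rows use only old neighbours, and even rows use the updated odd neighbours. Since $A$ is unchanged, its invertibility forces $M_\alpha=E_{\text{even}}E_{\text{odd}}^{\alpha}$, where $E_{\text{odd}}^{\alpha}$ is $E_{\text{odd}}$ with $E_1$ replaced by $E_1^{\alpha}$. Multiplicativity of the determinant then gives
\[
\det M_\alpha=(-1)^{m-1}\kappa_1 .
\]
As a cross-check, one can compute $\det B_\alpha/\det A$ directly. Because $A$ is fixed and the first row of $B_\alpha$ differs from that of $B$, the ratio $\det M_\alpha/\det M$ equals $-\kappa_1$.

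Finally I will verify $|\kappa_1|<1$, as asserted in \Cref{lem:2rob2dir}. The bound $\kappa_1<1$ is equivalent to $v_2(1-2\alpha)-\alpha v_1<v_2+\alpha v_1$, which holds because $2\alpha(v_1+v_2)>0$. The bound $\kappa_1>-1$ is equivalent to $2v_2(1-\alpha)>0$, which holds since $\alpha<1$. Together these give $|\det M_\alpha|<1$ for every $\alpha\in(0,1)$.

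The main obstacle is the factorization step, because the phrase ``same $A$, replace $B$ by $B_\alpha$'' presupposes a normalization of the first row. I must check that the token relation, once divided by its $x_1'$ coefficient, really has the same left-hand structure as the undamped odd row, so that the same elimination argument identifies $M_\alpha$ with $E_{\text{even}}E_{\text{odd}}^{\alpha}$. If this bookkeeping gets messy, I will fall back on the direct ratio $\det B_\alpha/\det A$, expanding only the first row of $B_\alpha$, where the normalization is explicit.
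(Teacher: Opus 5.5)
Your argument is correct and reaches the paper's formula $\det M_\alpha=(-1)^{m-1}\kappa_1$, but by a different decomposition. You factor $M_\alpha=E_{\text{even}}E^{\alpha}_{\text{odd}}$ into single-row operators. You then use that a matrix differing from the identity in one row has determinant equal to its diagonal entry in that row. So each undamped event contributes $-1$ and the token event contributes $\kappa_1$.

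The paper's proof does not use the event factorization. It writes $\det M_\alpha=\det B_\alpha/\det A$, Laplace-expands $\det B_\alpha$ along the modified first row, and cancels the unchanged remainder against $\det A$. This is your ``cross-check''. The fact you leave implicit there is that the even rows of $B$ are purely diagonal. Hence the $(1,2)$ cofactor vanishes, only the diagonal entry of row~1 survives, and the ratio $\det M_\alpha/\det M$ is exactly $-\kappa_1$.

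Your route gets $\det M=(-1)^m$ and $\det M_\alpha$ in one stroke. It shows the determinant deficit is exactly the two-robot contraction coefficient of \Cref{lem:2rob2dir}. It also makes clear that any further token-carrying event would just contribute its own diagonal factor. Its cost is that it rests on the sweep factorization, which the paper only sketches. Here that factorization is immediate, since odd rows are explicit in old coordinates and even rows are explicit in the updated odd ones. The paper's route needs only the sparsity pattern of $A$ and $B_\alpha$.

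The normalization issue you flag is harmless. Rescaling the same row of $A$ and $B_\alpha$ by a common nonzero factor leaves both $A^{-1}B_\alpha$ and $\det B_\alpha/\det A$ unchanged. Dividing the token relation by $v_2+\alpha v_1>0$ gives it the unit $A$-row $e_1^{\top}$, exactly as for the undamped odd row.
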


\begin{proof}[Proof sketch]
Only the first row of $B$ is modified.
Express $\det(A^{-1}B_\alpha)=\det(B_\alpha)/\det(A)$ and evaluate the continuant for $\det(B_\alpha)$ by Laplace expansion anchored at the modified row; the unchanged tail cancels with $\det(A)$.
The resulting rational form reduces to the expression, 
$
\det M_\alpha
= (-1)^{m-1}\,
\frac{v_2(1-2\alpha)-\alpha v_1}{\,v_2+\alpha v_1\,},
$
with $|\det M_\alpha|<1$ for $0<\alpha<1$.
\end{proof}

The main convergence result requires a full spectral contraction proof, which we leave as an open problem in the theoretical part. In \Cref{sec:simulations}, we present simulation results that strongly support the following conjecture.

\begin{conjecture}[Spectral contraction]
\label{conj:spectral}
\emph{The damped transfer map $M_\alpha$ is repeatedly applied and its spectral radius satisfies $\rho(M_\alpha)<1$, such that $u\to 0$ for $0<\alpha<1$.}
\end{conjecture}

 A formal contraction proof remains open but does not impede practical deployment (\Cref{sec:applications,sec:simulations}).

\Cref{fig:ex2_token} illustrates 
the effect of such a token placed for the exemplary start configuration of \Cref{fig:ex1}, inducing convergence of the collision abscissas towards the equilibrium.

\begin{figure}
    \centering
    \includegraphics[width=0.5\textwidth]{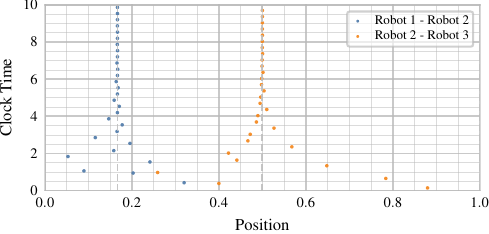}
    \caption{Collision positions for exemplary start configuration of \Cref{fig:ex1} with one-directional token in-between the first pair (\pebOne). The system approaches the equilibrium quickly.}
    \label{fig:ex2_token}
\end{figure}

\newcommand{\pebAll}{\texttt{Token-all}}

\subsubsection{Ubiquitous tokens ("\pebAll")}

Finally, we consider placing one token between any subsequent pair. This would correspond to every robot having the ability to store its last collision abscissa with both adjacent robots and to alter its velocity accordingly.

\section{Simulation Experiments} 
\label{sec:simulations}

To complement the theoretical analysis, we turn to simulations to develop empirically grounded hypotheses about aspects of the system that remain analytically unresolved.
This leads us to the following research questions:

\begin{description}
  \item[RQ1]\label{rq:rotational_entry} Do all systems always converge to the rotational, catch-up-free regime?
  \item[RQ2]\label{rq:pebble_placement} How do convergence speed and tail behavior depend on the initial token placement?
  \item[RQ3]\label{rq:speed_heterogeneity} How does heterogeneity in agent velocities influence convergence?
  \item[RQ4]\label{rq:pebble_slowdown} How does the token slow-down factor $\alpha$ impact convergence dynamics?
  \item[RQ5]\label{rq:reconvergence} How does the system behave after different types of perturbations?
\end{description}

\subsection{Experiment Design}

Our approach relies on Monte Carlo simulations across randomized trials, where we vary key parameters and record relevant metrics to address each research question.  
To conduct these experiments, we implemented a Python-based simulator that employs high precision arithmetic through the \texttt{decimal} module. 
Since velocities change only at predictable collision events, we adopt an event-driven simulation strategy, ensuring both efficiency and accuracy.
Metric tracking and termination checks are restricted to these events.  

We typically perform $N=1000$ trials, each involving up to $n=9$ robots initialized with random positions in $[0,1]$, random directions, and velocities sampled uniformly from $[0.5,10]$.
Comparisons are made between settings with no tokens, a single one-directional token between the first pair (\pebOne), and two-directional tokens between all adjacent pairs (\pebAll).
For most experiments, we utilize empirical complementary cumulative distribution functions (CCDFs), which plot the fraction of trials that have not yet satisfied a specified criterion (e.g., convergence) by time $t$.
The full set of experiments, including the simulator, will be made available in an open-source repository.

\begin{definition}[Convergence criterion]\label{def:convergence_criterion}
  We say the system is converged if, for every adjacent robot pair, the previous 10 collision abscissas have not been more than $\varepsilon=5\times10^{-4}$ apart from the respective equilibrium abscissa.
\end{definition}

\paragraph*{\hyperref[rq:rotational_entry]{RQ1}}

As it is impossible to empirically validate that a system will remain catch-up free indefinitely, we instead demonstrate that the ratio of catch-up events rapidly drops to zero, with no tested run exhibiting a same-direction catch-up beyond a certain time. The ratio is computed per time step, with the step size increasing exponentially to scale with the logarithmic plot.
We compare the system with no tokens, \pebOne, and \pebAll ~(slow-down factor $\alpha=0.5$).

\paragraph*{\hyperref[rq:pebble_placement]{RQ2}}

We conduct $N$ randomized trials for both token variants and for $n \in \{3,5,7\}$. 
The convergence times (as defined in \Cref{def:convergence_criterion}) are then compared using CCDFs.

\paragraph*{\hyperref[rq:speed_heterogeneity]{RQ3}}

We sweep the upper bound of the uniformly sampled velocities (keeping the lower bound fixed at 0.5) and plot the resulting convergence CCDFs (illustrated for $n=5$ with \pebAll) to observe the impact of higher maximum velocities on collision frequency and balancing.

\paragraph*{\hyperref[rq:pebble_slowdown]{RQ4}}

The influence of the slow-down factor $\alpha$ is assessed by measuring the average convergence time for $n \in \{3,5,7,9\}$ with \pebAll.
For each $n$, $\alpha$ is varied over $(0,1)$ in increments of $0.01$, using the same $N$ samples for all $\alpha$. The resulting mean convergence time curves are plotted as functions of $\alpha$.
The minimum of each curve indicates the empirically optimal $\alpha$ for that $n$.  

\paragraph*{\hyperref[rq:reconvergence]{RQ5}}

Starting from an equilibrium-like state (equilibrium abscissas, alternating directions), we apply perturbations:
remove or add one randomly chosen robot, jitter the velocity of one randomly chosen robot by factor $0.5$ or $2$, and jitter the positions of one randomly chosen robot or all robots.
We form reconvergence CCDFs ($n\in\{3,5,7\}$, both token settings, $\alpha=0.5$).
Additionally, we sweep the position jitter amplitude from $0\%$ to $25\%$ of the interval length and plot mean convergence time as a function of amplitude for $n\in\{3,5,7,9\}$ with \pebAll.
For velocity perturbations, we compare reconvergence times to random-start baselines using one-sided Mann--Whitney~U tests with Bonferroni correction ($\alpha_\text{corr}=0.0125$, 4 independent configurations).

\subsection{Results}

\paragraph*{\hyperref[rq:rotational_entry]{RQ1}} 

For all tested $n\in\{3,5,7\}$ ($n=5$ shown in \Cref{fig:exp_rotational}), the ratio of catch-up events to total collisions per clock time step decreases rapidly, reaches zero eventually, and does not recover within the extensive simulation timeframe. 
Tokens both reduce the initial ratio and accelerate the decline. 

\begin{figure}[t]
  \centering
  \includegraphics[width=\columnwidth]{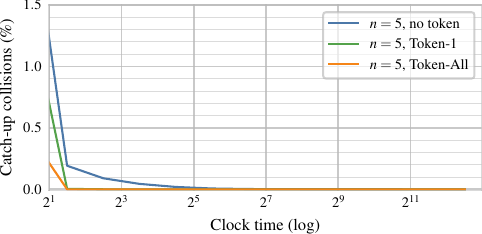}
  \caption{Ratio of catch-up events to total collisions per clock time bucket, here shown for $n=5$ (other $n$ look similar). Even without tokens, the system reaches the rotational, catch-up free state quickly.}
  \label{fig:exp_rotational}
  \end{figure}

\paragraph*{\hyperref[rq:pebble_placement]{RQ2}} 

The CCDFs in \Cref{fig:exp_convergence} show that systems with tokens between all pairs converge more quickly, with little variation across $n \in \{3,5,7\}$.
In contrast, with \pebOne, the convergence time increases noticeably as $n$ grows.

\begin{figure}[t]
  \centering
  \includegraphics[width=\columnwidth]{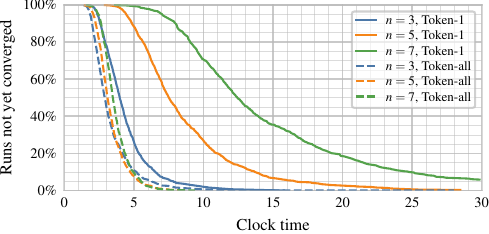}
  \caption{Empirical CCDFs for $n\in\{3,5,7\}$. \pebAll \ induces faster convergence than just \pebOne. More robots take longer to converge.}
  \label{fig:exp_convergence}
  \end{figure}

\paragraph*{\hyperref[rq:speed_heterogeneity]{RQ3}} 

As shown for $n=5$ with \pebAll \ in \Cref{fig:exp_hetero}, increasing the maximum of the uniformly sampled velocity range accelerates convergence. 
This applies to all tested $n\in\{3,5,7\}$ and both token configurations.

\begin{figure}[t]
  \centering
  \includegraphics[width=\columnwidth]{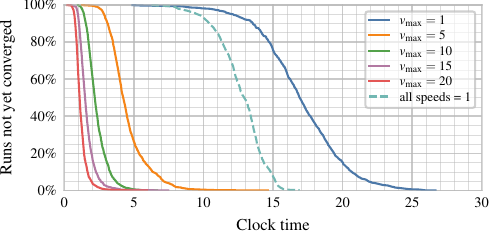}
  \caption{Empirical CCDFs for $n=5$ with \pebAll, varying the maximum velocity.}
  \label{fig:exp_hetero}
  \end{figure}

\paragraph*{\hyperref[rq:pebble_slowdown]{RQ4}} 

The results in \Cref{fig:exp_pebblefactor} show distinct mean convergence time curves for different values of $n$, with the minimum of each curve shifting leftward as $n$ increases.
Specifically, the empirically optimal values are approximately 
$\alpha \approx 0.38,\, 0.35,\, 0.33,\, 0.29$ for $n=3,\, 5,\, 7,\, 9$.
The average convergence time grows super-linearly when deviating from the optimal $\alpha$ in either direction.

\begin{figure}[t]
  \centering
  \includegraphics[width=\columnwidth]{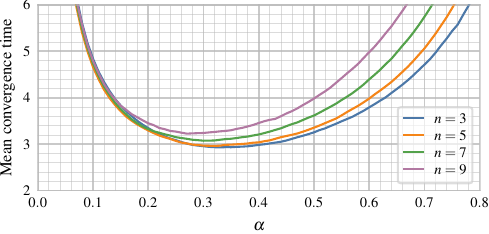}
  \caption{Mean convergence times with \pebAll, varying the slow-down factor $\alpha$. The $\alpha$ inducing the fastest convergence depends on $n$.}
  \label{fig:exp_pebblefactor}
  \end{figure}

\paragraph*{\hyperref[rq:reconvergence]{RQ5}} 

Across all tested perturbation types, the system reliably reconverges to the (possibly new) equilibrium.

After robot addition/removal (\Cref{fig:exp_perturb_add_remove}), the system reliably converges to the correct new equilibrium without coordination. 
Reconvergence after removal is consistently faster than after addition; \pebAll\ reduces convergence times versus \pebOne.
The position jitter sweep (\Cref{fig:exp_perturb_jitterSweep}) shows that mean convergence time increases with jitter amplitude but exhibits at most linear growth; 
we observe no super-linear or catastrophic degradation, even at $25\%$ displacement.
A power-law fit $T(\delta) = t_0 + a\cdot\delta^b$ and bootstrap 95\% confidence intervals confirm sub-linear or linear growth.
Non-zero convergence time at zero jitter reflects detection latency of the convergence criterion (\Cref{def:convergence_criterion}).
Reconvergence after velocity perturbation (\Cref{fig:exp_perturb_speedchange}) is significantly faster than from random starts (Mann--Whitney~U, all $p<0.0125$ after Bonferroni correction).
Consistently with \hyperref[rq:speed_heterogeneity]{RQ3}, $\times 2$ yields faster reconvergence than $\times 0.5$.

\begin{figure}[t]
  \centering
  \includegraphics[width=\columnwidth]{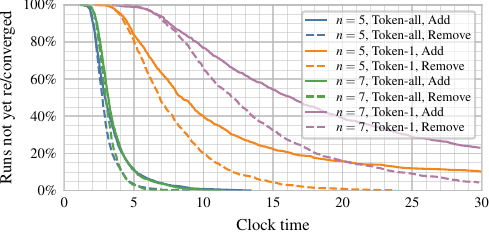}
  \caption{Perturbation of adding/removing one randomly chosen robot from an equilibrium-like state, resulting in $n\in\{5,7\}$ after the change. Reconvergence after removing a robot is faster than reconvergence after adding one; \pebAll \ induces faster reconvergence than \pebOne.}
  \label{fig:exp_perturb_add_remove}
\end{figure}

\begin{figure}[t]
  \centering
  \includegraphics[width=\columnwidth]{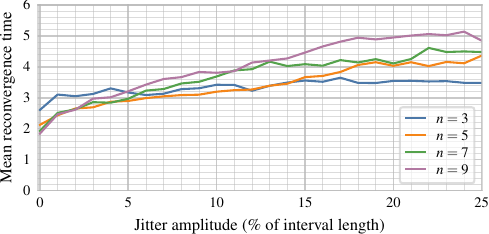}
  \caption{Mean convergence time vs.\ position jitter amplitude (all robots perturbed) with \pebAll, for $n\in\{3,5,7,9\}$. Growth is at most linear; no catastrophic degradation occurs even at $25\%$ displacement.}
  \label{fig:exp_perturb_jitterSweep}
\end{figure}

\begin{figure}[t]
  \centering
  \includegraphics[width=\columnwidth]{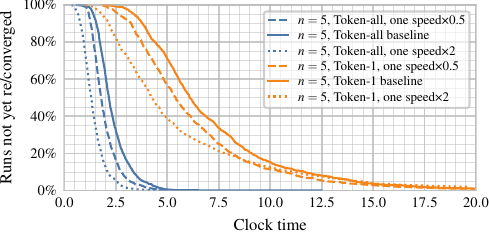}
  \caption{Reconvergence CCDFs after velocity perturbation (factor $0.5$ or $2$) of one robot from equilibrium, for $n=5$, vs.\ random-start baseline. Reconvergence is faster than baseline (Mann--Whitney~U, $p<0.0125$).}
  \label{fig:exp_perturb_speedchange}
\end{figure}

\subsection{Discussion}

\paragraph*{\hyperref[rq:rotational_entry]{RQ1}} 
The observation of the ratio of catch-up events per time step declining to 0 supports the hypothesis that the system always reaches the rotational state, but does not constitute a formal proof that catch-ups never re-emerge. For final conclusiveness, a theoretical argument is needed.

\paragraph*{\hyperref[rq:pebble_placement]{RQ2}} 

\pebAll \ consistently shortens convergence times relative to \pebOne; larger $n$ slows convergence. 
This supports the claim that broader damping distribution improves practical convergence.

\paragraph*{\hyperref[rq:speed_heterogeneity]{RQ3}} 

The results suggest that increasing only the upper bound of a uniform velocity interval accelerates convergence. 
This could be explained by the increased collision rate, due to the higher average velocity, dominating any added disparity in these settings. 
However, the exact dynamic changes and asymptotic behavior with increasing $n$ remain to be explored.

\paragraph*{\hyperref[rq:pebble_slowdown]{RQ4}}

Based on the curves in \Cref{fig:exp_pebblefactor}, we hypothesize that the optimal $\alpha$ decreases as $n$ increases.
Since even for $n=3$ the best value is below $0.4$, selecting a larger $\alpha$ is likely suboptimal for any configuration.

\paragraph*{\hyperref[rq:reconvergence]{RQ5}} 

The system reliably reconverges across all tested perturbation types and magnitudes, supporting operational resilience.
The jitter sweep (\Cref{fig:exp_perturb_jitterSweep}) provides a quantitative robustness characterization: convergence time grows at most linearly with perturbation amplitude, indicating graceful degradation for position inaccuracies or external perturbations.
For velocity perturbations, reconvergence is faster than from random starts, indicating quick recovery from battery drain or load-induced velocity changes.
Robot addition/removal shows robust system reconfiguration behavior in case of individual robot failure or increased team size.

\subsection{Threats to Validity}  

Our criteria for reaching a catch-up-free state and for convergence serve as strong indicators but do not constitute formal guarantees.
The random generation procedure is also unlikely to produce certain edge cases, particularly highly symmetric initial configurations.
To probe the robustness of our criteria, we applied stricter conditions and examined manually crafted scenarios, but these did not reveal counterexamples.
The simulations reported here focus on small numbers of robots, which appear sufficient to reveal the relevant patterns.
Additional exploratory simulations with larger numbers of robots were conducted but are not reported here due to space constraints.

\section{Applications \& Practical Considerations}
\label{sec:applications}

\subsection{Target Application Domains}

\paragraph{Warehouse \& Intralogistics}
Large-scale warehouse robotics has focused initially on open-grid layouts with centralized coordination \cite{coordinating_wurman_2008}, but AMRs increasingly support decentralized decision-making \cite{planning_fragapane_2021,review_keith_2024}. 
As centralized control faces scalability limits with large fleets, tangible decentralization approaches gain attention \cite{review_keith_2024}. Meanwhile, operations along narrow aisles \cite{mobile_borenstein_1995,improving_thomsen_2023} and heterogeneous robot deployments \cite{warehouses_kang_2025}---where dedicated aisle assignments reduce throughput time by up to 26\%---motivate coordination mechanisms suited to confined, one-dimensional paths.

Our bucket brigade approach addresses these challenges: it provides fully decentralized coordination without communication, achieving emergent territorial partitioning where heterogeneous capabilities map to velocity-proportional load balancing. 
Reconvergence after perturbations (\Cref{fig:exp_perturb_add_remove}) enables dynamic fleet adjustments under failures or demand surges---a robustness property desired in current AMR research \cite{planning_fragapane_2021}.

\paragraph{Agricultural Automation}
Agricultural robotics faces environmental challenges (occlusion, harsh conditions, infrastructure limits), and effective commercialization remains challenging due to task complexity \cite{harvesting_bac_2014,agricultural_fountas_2020}. 
Multi-robot greenhouse teams are promising \cite{heterogeneous_roldan_2016}, but coordination remains centralized in most studies, relying on communication infrastructure that cannot always be guaranteed \cite{SONG2026111396}. 
Narrow-aisle operations \cite{SONG2026111396,heterogeneous_roldan_2016} in row-constrained spaces motivate one-dimensional coordination models.

Our bucket brigade provides decentralized coordination without communication infrastructure, also allowing for heterogeneous robot teams. 
Self-recovery (\Cref{fig:exp_perturb_add_remove,fig:exp_perturb_jitterSweep}) addresses fault tolerance needs \cite{heterogeneous_roldan_2016}, while velocity adaptation (\Cref{fig:exp_perturb_speedchange}) models battery/payload variations.

\paragraph{Infrastructure Inspection}
Inspection of linear structures (pipelines, tunnels) has relied on single-robot systems \cite{application_rumson_2021,autonomous_jang_2022}. 
Multi-robot inspection remains mostly limited to open environments \cite{multirobot_liu_2025}, as coordination in confined linear spaces faces challenges where communication infrastructure is unavailable. 
Our decentralized, communication-free approach enables multi-robot teams in such settings.

\subsection{From Model to Physical Systems}

\paragraph{Physical Robot Constraints}
Our idealized model assumes point agents with instantaneous velocity changes and precise collision detection. 
In practice, robots with body radius $r$ and sensor range $d$ operate in the free space between physical boundaries.
The model's unit interval represents this available space, with detection events (via IR/ultrasonic sensors \cite{mobile_borenstein_1995} or compliant bumpers \cite{robot_haddadin_2017}) corresponding to model collisions at separation $d$. 
Acceleration limits mean tokens trigger velocity ramps rather than instantaneous changes; the model needs to be adjusted accordingly. 
Position jitter experiments (\Cref{fig:exp_perturb_jitterSweep}) indicate localization tolerance, however, guarantees outside the idealized system are heuristic. 
Tokens can be implemented via memory (store last collision position) or physical markers (RFID, visual fiducials) for minimal-computation platforms. The latter does not require any position or distance sensing of the robots.
Regarding physical safety, body size, regulations, and wall/marker detection errors reduce usable length and may bias steady borders; safety envelopes and stop accuracy must exceed any hysteresis.

\paragraph{Robustness}
Simulation experiments (Sec.~\ref{sec:simulations}) demonstrate resilience to operational disturbances: robot addition/removal (\Cref{fig:exp_perturb_add_remove}), velocity changes (\Cref{fig:exp_perturb_speedchange}) and position jitter (\Cref{fig:exp_perturb_jitterSweep}) model fleet scaling, battery degradation or payload variations, and localization uncertainty. 
Simulations showing reliable reconvergence after perturbations suggest inherent system resilience. The zero-communication design eliminates infrastructure requirements.

\subsection{Comparison to Alternative Approaches}

\paragraph{Non-zero communication extensions}
Adding communication can simplify the problem but requires additional capabilities: exchanging velocities at collision lets pairs compute equilibrium boundaries directly, yet requires accurate velocity self-knowledge (fragile under battery drain or payload changes), on-board arithmetic, and precise target navigation.
Remote position sharing eliminates physical collisions entirely but demands full infrastructure (localization, radio, waypoint control).
The token mechanism requires none of these—convergence emerges from dynamics rather than computation, requiring only the recognition of crossing the previous collision position, and automatically adapts under perturbations.

\paragraph{Other coordination approaches}
Prior coordination approaches explore components of our problem separately and make different infrastructure trade-offs: 
centralized methods achieve load balancing \cite{spatialtemporal_blad_2023,warehouses_kang_2025} and path optimization \cite{conflictbased_sharon_2015,improved_li_2019} but require infrastructure; 
distributed algorithms \cite{consensus_ren_2005,coverage_cortes_2004,distributed_portugal_2013,cooperative_pasqualetti_2012} improve scalability through message passing; 
heterogeneous-speed patrolling has been optimized via offline scheduling \cite{fence_kawamura_2015}; and communication-free coordination has been demonstrated in foraging \cite{adaptive_lein_2008,Ostergaard2001_EmergentBucketBrigading}.
We complement these approaches by combining communication-free operation, heterogeneous speeds, and persistent patrolling through collision-based bucket brigade coordination.

\section{Conclusion \& Outlook}

We presented an event-based analysis of heterogeneous bucket brigades, showing that undamped collision dynamics are conservative. 
A token mechanism storing a single collision position per robot induces damping and consistent convergence. 
The approach achieves velocity-proportional territory partitioning through collision-based coordination without communication or centralized planning. 
Simulations demonstrate robustness against various perturbation types.

Extensions to graph topologies are underway. Preliminary results show complications (e.g., unreachable subgraphs under certain traversal rules, altered convergence behavior) but suggest using boundary markers to partition graphs into linear segments. 
We also consider slightly more complex but still local coordination mechanisms (e.g., token types with different effect, tokens with $\alpha=1$, ``traffic lights'').

Deployment requires only velocity measurement, acceleration control, collision detection, and single-value memory (token storage) per robot. 
These minimal system requirements position this as a complementary coordination approach for infrastructure-limited environments where communication overhead or centralized control are impractical.

\section*{Acknowledgments}
We thank Aaron T.\ Becker and Erik Demaine for helpful discussions.
OpenAI GPT-5 was used to assist with language editing, code drafting, and standard algebraic operations.

\bibliographystyle{IEEEtran}
\bibliography{lit}

@article{bartholdi1996production,
  title={A production line that balances itself},
  author={Bartholdi III, John J and Eisenstein, Donald D},
	journal={Oper. Res.},
  volume={44},
  number={1},
  year={1996},
  publisher={INFORMS},
  doi={10.1287/opre.44.1.21}
}

@article{bartholdi1999dynamics,
  title={Dynamics of two-and three-worker “bucket brigade” production lines},
  author={Bartholdi III, John J and Bunimovich, Leonid A and Eisenstein, Donald D},
	journal={Oper. Res.},
  volume={47},
  number={3},
  year={1999},
  publisher={INFORMS},
  doi={10.1287/opre.47.3.488}
}

@article{bartholdi2001performance,
  title={Performance of bucket brigades when work is stochastic},
  author={Bartholdi III, John J and Eisenstein, Donald D and Foley, Robert D},
	journal={Oper. Res.},
  volume={49},
  number={5},
  year={2001},
  publisher={INFORMS},
  doi={10.1287/opre.49.5.71.10609}
}

@article{bratcu2009some,
  title={Some new results on the analysis and simulation of bucket brigades (self-balancing production lines)},
  author={Bratcu, Antoneta Iuliana and Dolgui, Alexandre},
	journal={Int. J. Prod. Res.},
  volume={47},
  number={2},
  year={2009},
  publisher={Taylor \& Francis},
  doi={10.1080/00207540802426128}
}

@techreport{bartholdi2004chaos,
  title={Chaos and convergence in bucket brigades with finite walk-back velocities},
  author={Bartholdi, JJ and Eisenstein, Donald D and Lim, Yun Fong},
  institution={Dept. of Industrial Engineering, Georgia Institute of Technology},
  year={2004}
}

@inproceedings{chen2013fence,
  title={On Fence Patrolling by Mobile Agents},
  author={Chen, Ke and Dumitrescu, Adrian and Ghosh, Anirban},
	booktitle={Can. Conf. Comput. Geom. (CCCG)},
  year={2013}
}

@inproceedings{czyzowicz2011boundary,
  title={Boundary patrolling by mobile agents with distinct maximal speeds},
  author={Czyzowicz, Jurek and G{\k{a}}sieniec, Leszek and Kosowski, Adrian and Kranakis, Evangelos},
	booktitle={Eur. Symp. Algorithms (ESA)},
  year={2011},
}

@masterthesis{MensingThesis,
  title        = {Ansätze für {B}ewegungsplanungs-{P}robleme mit dem {Z}iel der verketteten {A}rbeitsteilung},
  author       = {Robert Aron Mensing},
  year         = 2021,
  school       = {Westfälische Wilhelms-Universität Münster},
  type         = {Bachelor's thesis},
  note = {{B}achelor's thesis}
}

@INPROCEEDINGS{Itani1995_MiniRobots,
  author={Itani, H. and Noda, M. and Natsume, M. and Itoh, H. and Tanaka, H. and Hattori, H. and Tanase, H. and Asano, M. and Nohara, T. and Hasegawa, K. and Matsuoka, T. and Matsui, T. and Ando, M. and Nogimori, W. and Naruse, Y.},
	booktitle={Int. Symp. Micro Mach. Hum. Sci. (MHS)}, 
  title={A study of mini-robots bucket brigade system}, 
  year={1995},
  volume={},
  number={},
  doi={10.1109/MHS.1995.494239}
  }

@article{Peng2022_StochasticDiscrete,
author = {Peng Wang and Kai Pan and Zhenzhen Yan and Yun Fong Lim},
title ={Managing Stochastic Bucket Brigades on Discrete Work Stations},
journal = {Prod. Oper. Manag.},
volume = {31},
number = {1},
year = {2022},
doi = {10.1111/poms.13539},
}

@article{Lim2014_Cellular,
author = {Yun Fong Lim and Yue Wu},
title ={Cellular Bucket Brigades on U‐Lines with Discrete Work Stations},

journal = {Prod. Oper. Manag.},
volume = {23},
number = {7},
year = {2014},
doi = {10.1111/poms.12091},
}

@article{lim2011cellular,
  title={Cellular bucket brigades},
  author={Lim, Yun Fong},
	journal={Oper. Res.},
  volume={59},
  number={6},
  year={2011},
  publisher={INFORMS}
}

@article{Lim2009_MaximizingThroughput,
author = {Yun Fong Lim and Kum Khiong Yang},
title ={Maximizing Throughput of Bucket Brigades on Discrete Work Stations},

journal = {Prod. Oper. Manag.},
volume = {18},
number = {1},
year = {2009},
doi = {10.1111/j.1937-5956.2009.01009.x},
}

@article{Bartholdi2006_Trees,
title = {Bucket brigades on in-tree assembly networks},
journal = {Eur. J. Oper. Res.},
volume = {168},
number = {3},
year = {2006},
issn = {0377-2217},
doi = {https://doi.org/10.1016/j.ejor.2004.07.034},
author = {John J. Bartholdi and Donald D. Eisenstein and Yun Fong Lim},
}

@inproceedings{Ostergaard2001_EmergentBucketBrigading,
  title={Emergent bucket brigading: a simple mechanisms for improving performance in multi-robot constrained-space foraging tasks},
  author={Ostergaard, Esben H and Sukhatme, Gaurav S and Matari, Maja J},
	booktitle={Int. Conf. Auton. Agents},
  year={2001}
}

@inproceedings{Klavins2000_ConcurrentRobot,
  title={A formalism for the composition of concurrent robot behaviors},
  author={Klavins, Eric and Koditschek, Daniel E},
	booktitle={IEEE Int. Conf. Robot. Autom. (ICRA)},
  year={2000},
}

@article{SONG2026111396,
title = {Design of robot navigation system under spatial constraints in shiitake mushroom fruiting room},
journal = {Comput. Electron. Agric.},
volume = {243},
year = {2026},
doi = {https://doi.org/10.1016/j.compag.2025.111396},
author = {Hualu Song and Kangkang Qi and Wenjie Feng and Yuanjie Mu and Jun Li and Xiangyu Lv and Zhichao Liang and Fengyun Wang},
}

@article{adaptive_lein_2008,
	title = {Adaptive multi-robot bucket brigade foraging},
	author = {Lein, Adam and Vaughan, Richard},
	journal = {Artificial Life},
	year = {2008},
	researchRabbitId = {9c591c59-d459-4dbf-b8f0-bbd99b0f2a32}
}

@article{fence_kawamura_2015,
	title = {Fence patrolling by mobile agents with distinct speeds},
	doi = {10.1007/S00446-014-0226-3},
	author = {Kawamura, Akitoshi and Kobayashi, Yusuke},
	journal = {Distributed Computing},
	year = {2015},
	researchRabbitId = {fd7cc196-0290-442f-a14d-a8bb425b464d}
}

@article{warehouses_kang_2025,
	title = {Warehouses with heterogeneous robots collaboration: operational policies and performance analysis},
	doi = {10.1080/00207543.2025.2513576},
	author = {Kang, Yuexin and Wang, Rong and Qin, Zhizhen and Yang, Peng and Yan, Yimo},
	journal = {Int. J. Prod. Res.},
	year = {2025},
	researchRabbitId = {8e9a86f0-98b8-4f92-9b10-aaaf02ae8809}
}

@article{cooperative_pasqualetti_2012,
	title = {Cooperative Patrolling via Weighted Tours: Performance Analysis and Distributed Algorithms},
	doi = {10.1109/TRO.2012.2201293},
	author = {Pasqualetti, Fabio and Durham, Joseph W. and Bullo, Francesco},
	journal = {IEEE Trans. Robot.},
	year = {2012},
	researchRabbitId = {455050d9-2695-4553-bfc8-5623a0715eb5}
}

@article{improved_li_2019,
	title = {Improved Heuristics for Multi-Agent Path Finding with Conflict-Based Search: Preliminary Results},
	doi = {10.1609/SOCS.V10I1.18481},
	author = {Li, Jiaoyang and Felner, Ariel and Boyarski, Eli and Koenig, Sven and Ma, Hang},
	journal = {Symp. Combinatorial Search},
	year = {2019},
	researchRabbitId = {4a814167-ab47-4af1-9eff-e8c86cb1e9eb}
}

@article{autonomous_jang_2022,
	title = {Autonomous Navigation of In-Pipe Inspection Robot Using Contact Sensor Modules},
	doi = {10.1109/TMECH.2022.3162192},
	author = {Jang, Heesik and Kim, Tae Yu and Lee, Ye Chan and Song, Yong Heon and Choi, Hyouk Ryeol},
	journal = {IEEE/ASME Trans. Mechatronics},
	year = {2022},
	researchRabbitId = {1282eeca-5a26-40e0-b690-621a074da859}
}

@article{application_rumson_2021,
	title = {The application of fully unmanned robotic systems for inspection of subsea pipelines},
	doi = {10.1016/J.OCEANENG.2021.109214},
	author = {Rumson, Alexander G.},
	journal = {Ocean Engineering},
	year = {2021},
	researchRabbitId = {7b7c3a52-efdb-4c38-a032-0e632845b4b4}
}

@article{review_keith_2024,
	title = {Review of Autonomous Mobile Robots for the Warehouse Environment},
	doi = {10.48550/ARXIV.2406.08333},
	author = {Keith, Russell and La, Hung Manh},
	journal = {arXiv.org},
	year = {2024},
	researchRabbitId = {4d0e57b9-5971-4522-92a0-142f4ee88c51}
}

@article{coordinating_wurman_2008,
	title = {Coordinating Hundreds of Cooperative, Autonomous Vehicles in Warehouses},
	doi = {10.1609/AIMAG.V29I1.2082},
	author = {Wurman, Peter R. and D’Andrea, Raffaello and Mountz, Mick},
	journal = {AI Mag.},
	year = {2008},
	researchRabbitId = {6b753a4c-6aa5-496f-8a4a-fefbc3df7462}
}

@article{planning_fragapane_2021,
	title = {Planning and control of autonomous mobile robots for intralogistics: Literature review and research agenda},
	doi = {10.1016/J.EJOR.2021.01.019},
	author = {Fragapane, Giuseppe and Koster, René de and Sgarbossa, Fabio and Strandhagen, Jan Ola},
	journal = {Eur. J. Oper. Res.},
	year = {2021},
	researchRabbitId = {1f239a20-67c8-4bfd-add6-c49ac018a087}
}

@article{coverage_cortes_2004,
  title={Coverage control for mobile sensing networks},
  author={Cortes, Jorge and Martinez, Sonia and Karatas, Timur and Bullo, Francesco},
	journal={IEEE Trans. Robot. Autom.},
  volume={20},
  number={2},
  year={2004},
  publisher={IEEE}
}

@article{conflictbased_sharon_2015,
	title = {Conflict-based search for optimal multi-agent pathfinding},
	doi = {10.1016/J.ARTINT.2014.11.006},
	author = {Sharon, Guni and Stern, Roni and Felner, Ariel and Sturtevant, Nathan},
	journal = {Artif. Intell.},
	year = {2015},
	researchRabbitId = {0ca063d0-6ae8-4463-ab60-a23ac369f354}
}

@article{consensus_ren_2005,
	title = {Consensus seeking in multiagent systems under dynamically changing interaction topologies},
	doi = {10.1109/TAC.2005.846556},
	author = {Ren, Wei and Beard, Randal W.},
	journal = {IEEE Trans. Autom. Control},
	year = {2005},
	researchRabbitId = {42d73015-7ffa-4aad-bdae-9c8121501ddb}
}

@article{heterogeneous_roldan_2016,
	title = {Heterogeneous Multi-Robot System for Mapping Environmental Variables of Greenhouses},
	doi = {10.3390/S16071018},
	author = {Roldan, Juan Jesus and Garcia-Aunon, Pablo and Garzón, Mario and León, J. de la Fuente and Cerro, Jaime del and Barrientos, Antonio},
	journal = {Sensors},
	year = {2016},
	pubmedId = {https://pubmed.ncbi.nlm.nih.gov/27376297},
	researchRabbitId = {29150aab-b33c-4c0c-b171-68ee24ea08cb}
}

@article{harvesting_bac_2014,
	title = {Harvesting Robots for High‐value Crops: State‐of‐the‐art Review and Challenges Ahead},
	doi = {10.1002/ROB.21525},
	author = {Bac, C.W. and Henten, E.J. van and Hemming, J. and Edan, Yael},
	journal = {J. Field Robotics},
	year = {2014},
	researchRabbitId = {45ca545e-96f5-4ad3-9176-661108050ce9}
}

@article{agricultural_fountas_2020,
	title = {Agricultural Robotics for Field Operations.},
	doi = {10.3390/S20092672},
	author = {Fountas, Spyros and Mylonas, Nikos and Malounas, Ioannis and Rodias, Efthymios and Santos, Christoph Hellmann and Pekkeriet, E.J.},
	journal = {Sensors},
	year = {2020},
	pubmedId = {https://pubmed.ncbi.nlm.nih.gov/32392872},
	researchRabbitId = {4d15ccc8-3976-4c99-bb2c-87a39e3257b1}
}

@article{robot_haddadin_2017,
	title = {Robot Collisions: A Survey on Detection, Isolation, and Identification},
	doi = {10.1109/TRO.2017.2723903},
	author = {Haddadin, Sami and Luca, Alessandro De and Albu‐Schäffer, Alin},
	journal = {IEEE Trans. Robot.},
	year = {2017},
	researchRabbitId = {97bd1f96-0e41-4b0c-b7e8-d077539ae333}
}

@article{multirobot_liu_2025,
	title = {Multi-robot cooperative inspection planning of substation based on genetic algorithm and deep reinforcement learning},
	doi = {10.1109/ACCESS.2025.3598345},
	author = {Liu, Jie and Wang, Jingsheng and Peng, Jiajun and Yuan, Hong and Shao, Tingting},
	journal = {IEEE Access},
	year = {2025},
	researchRabbitId = {50ff4ef6-a69f-4c97-91e1-d9e2f6fb2e47}
}

@article{distributed_portugal_2013,
	title = {Distributed multi-robot patrol: A scalable and fault-tolerant framework},
	doi = {10.1016/J.ROBOT.2013.06.011},
	author = {Portugal, David and Rocha, Rui P.},
	journal = {Robot. Auton. Syst.},
	year = {2013},
	researchRabbitId = {61b91adf-e522-4b7e-941c-c92374e2df94}
}

@inproceedings{mobile_borenstein_1995,
  title={Mobile robot navigation in narrow aisles with ultrasonic sensors},
  author={Borenstein, Johann and Wehe, David and Feng, Liqiang and Koren, Yoram},
  booktitle={ANS Topical Meeting on Robotics and Remote Systems},
  volume={12},
  year={1995}
}

@article{spatialtemporal_blad_2023,
	title = {Spatial-Temporal Load Balancing and Coordination of Multi-Robot Stations},
	doi = {10.1109/TASE.2022.3214567},
	author = {Åblad, Edvin and Spensieri, Domenico and Bohlin, Robert and Carlson, Johan S. and Strömberg, Ann-Brith},
	journal = {IEEE Trans. Autom. Sci. Eng.},
	year = {2023},
	researchRabbitId = {523ce0de-6e0f-40d6-97dc-18f69c9afee6}
}

@article{improving_thomsen_2023,
	title = {Improving Throughput of Mobile Robots in Narrow Aisles},
	doi = {10.5220/0011717500003417},
	author = {Thomsen, Simon Francis and Davidsen, Martin and Naik, Lakshadeep and Kollakidou, Avgi and Bodenhagen, Leon and Krüger, Norbert},
	journal = {VISIGRAPP},
	year = {2023},
	researchRabbitId = {418b83f8-4e44-44db-9655-6907450f1e8e}
}

\end{document}